\documentclass[11pt]{article}

\usepackage[preprint]{acl}

\usepackage{times}
\usepackage{latexsym}

\usepackage[T1]{fontenc}

\usepackage[utf8]{inputenc}

\usepackage{microtype}

\usepackage{inconsolata}

\usepackage{graphicx}

\usepackage{amsmath,amsfonts,bm}

\def\eqref#1{equation~\ref{#1}}

\def\1{\bm{1}}

\def\mK{{\bm{K}}}

\def\mV{{\bm{V}}}

\def\mX{{\bm{X}}}

\DeclareMathAlphabet{\mathsfit}{\encodingdefault}{\sfdefault}{m}{sl}
\SetMathAlphabet{\mathsfit}{bold}{\encodingdefault}{\sfdefault}{bx}{n}

\usepackage{hyperref}
\usepackage{url}
\usepackage{enumitem}
\usepackage{subcaption}
\usepackage{graphicx}
\usepackage{multirow}
\usepackage{booktabs}
\usepackage{float}
\usepackage{wrapfig}
\usepackage[table]{xcolor}
\usepackage{amsthm}
\usepackage{algorithm}
\usepackage{algorithmic}
\newtheorem{lemma}{Lemma}

\title{LongFlow: Efficient KV Cache Compression for Reasoning Models}

\author{Yi Su$^{1}$\thanks{\; Equal Contribution.}, Zhenxu Tian$^1$\footnotemark[1], Dan Qiao$^{2}$, Yuechi Zhou$^{1}$, Juntao Li$^{1}$\thanks{\; Corresponding author.}, Min Zhang$^{1}$ \\  
 $^{1}$School of Computer Science and Technology, Soochow University, China \\
 $^{2}$ByteDance \\
 \texttt{yisunlp@outlook.com}
 }

\begin{document}

\maketitle

\begin{abstract}
Recent reasoning models such as OpenAI-o1 and DeepSeek-R1 have shown strong performance on complex tasks including mathematical reasoning and code generation. However, this performance gain comes with substantially longer output sequences, leading to significantly increased deployment costs. In particular, long outputs require large KV caches, resulting in high memory consumption and severe bandwidth pressure during attention computation.
Most existing KV cache optimization methods are designed for long-input, short-output scenarios and are ineffective for the long-output setting of reasoning models. Moreover, importance estimation in prior work is computationally expensive and becomes prohibitive when continuous re-evaluation is required during long generation.
To address these challenges, we propose \textbf{LongFlow}, a KV cache compression method with an efficient importance estimation metric derived from an intermediate result of attention computation using only the current query. This design introduces negligible computational overhead and requires no auxiliary storage. We further develop a custom kernel that fuses FlashAttention, importance estimation, and token eviction into a single optimized operator, improving system-level efficiency.
Experiments show that LongFlow achieves up to an 11.8$\times$ throughput improvement with 80\% KV cache compression with minimal impact on model accuracy.\footnote{Code is available at \url{https://github.com/yisunlp/LongFLow}.}
\end{abstract}
\section{Introduction}
The rapid advancement of Large Language Models (LLMs) \citep{qwen2,llama3,gpt4o,deepseekv3} has marked a pivotal stage in artificial intelligence, achieving state-of-the-art performance across a wide range of tasks. Recently, a new class of \textbf{reasoning models} has emerged to explicitly tackle complex reasoning problems \citep{deepseekr1,openaio1,kimik1.5,qwen3,llama4}. These models rely on long Chain-of-Thought (CoT) reasoning and generate extensive intermediate steps during inference. While highly effective in domains such as mathematical reasoning and code generation, they introduce new challenges for efficient training and deployment.

A key drawback of reasoning models is their tendency to generate a large number of output tokens \citep{overthinking,overthinking1,speculativecot}, which substantially inflates the KV cache and leads to severe memory and bandwidth bottlenecks during attention computation. Although prior work has explored KV cache compression, most methods are designed for long-input, short-output scenarios and are ill-suited for long-output generation. Some approaches only compress the KV cache during the prefill stage \citep{snapkv,pyramidkv,calidrop}, while others incur significant computational overhead and require additional auxiliary storage \citep{h2o,vatp}. These limitations are particularly pronounced in long-output settings, highlighting the need for more efficient KV cache compression methods for reasoning models.

To address this gap, we propose \textbf{LongFlow}, an efficient KV cache compression method tailored for long-output generation. The core idea of LongFlow is a fast yet accurate importance estimation metric for historical tokens, derived directly from an intermediate value of the standard attention computation using only the current query. This design requires no auxiliary storage and introduces negligible computational overhead. We provide both theoretical justification, showing that the metric approximates attention output loss, and empirical validation through extensive experiments.

Beyond the algorithmic design, we introduce several system-level optimizations to maximize practical efficiency. We adopt a static KV cache that pre-allocates memory to avoid fragmentation and dynamic allocation overhead. To further improve hardware utilization, we implement a custom Triton kernel that fuses FlashAttention, importance estimation, and token eviction into a single optimized operator, reducing attention latency from 47\,ms to 8\,ms (Figure~\ref{fig:speed_kernel}). Overall, LongFlow achieves an 11.8$\times$ throughput improvement with an 80\% KV cache compression ratio, while preserving most of the model accuracy.

Our contributions are summarized as follows:
\begin{itemize}[itemsep=3pt,topsep=0pt,parsep=0pt]
\item \textbf{A lightweight KV cache compression algorithm for long-output generation.} We propose LongFlow, which introduces an accurate importance metric computed efficiently from the current query and intermediate attention results, with negligible overhead.
\item \textbf{A fused, high-performance attention kernel.} We design a custom Triton kernel that integrates attention computation, importance estimation, and token eviction into a single optimized operator.
\item \textbf{State-of-the-art efficiency for reasoning models.} Experiments demonstrate up to an 11.8$\times$ throughput improvement and an 80\% reduction in KV cache size, with minimal impact on model accuracy.
\end{itemize}

\section{Preliminary}
\subsection{Background: Attention, KV cache, and Hardware-Aware Optimization} 

The attention mechanism \citep{transformer} is a core component of LLMs. Given an input sequence \( \mathbf{X} \in \mathbb{R}^{b \times s \times d} \), where \( b \) is the batch size, \( s \) is the sequence length, and \( d \) is the hidden dimension, it is first projected into \( \mathbf{Q} , \mathbf{K}, \mathbf{V} \) by learnable weight matrices \( \mathbf{W}_q, \mathbf{W}_k, \mathbf{W}_v \in \mathbb{R}^{d \times d} \). For simplicity, we omit the number of the attention heads.
The attention output is computed using the Scaled Dot-Product Attention mechanism as follows:
\begin{equation}
\label{eq:attention}
\resizebox{\columnwidth}{!}{$
\text{Attention}(\mathbf{Q}, \mathbf{K}, \mathbf{V}) = \text{softmax}\left(Mask(\frac{\mathbf{QK}^T}{\sqrt{d_k}})\right)\mathbf{V}
$}
\end{equation}
where $d_k$ is the dimension of the key vectors.
However, this attention mechanism relies on all past \(\mathbf{K} \) and \(\mathbf{V} \) when calculating the output of the current step. Therefore, LLMs typically utilize KV cache to accelerate auto-regressive decoding. The generation process of LLMs with KV cache is divided into the prefill phase and the decoding phase \citep{pd}.

\begin{figure}[t]
  \centering
  \includegraphics[width=0.4\textwidth]{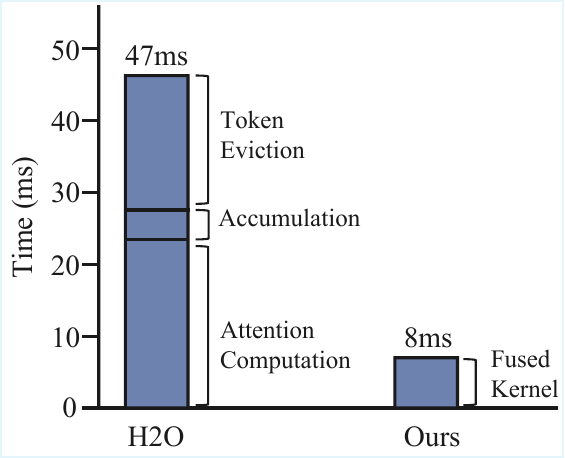}
  \caption{Attention module latency of H2O and our kernel on Qwen3-8B with batch size 128 and sequence length 3200. Both methods evict one token after each attention computation.}
  \label{fig:speed_kernel}
\end{figure}

\textit{i)} During the prefill phase, given input \( \displaystyle \mX \), the Keys \( \displaystyle \mK_{<n} \) and Values \( \displaystyle \mV_{<n} \) are computed and cached.
\textit{ii)} During the decoding phase, only the Keys and Values of the new token \( x_n \) need to be calculated, which are then combined with the cached Keys and Values to execute attention computation.

From Equation \ref{eq:attention}, we can find that the classical attention computation is a memory-bound operation, as it requires multiple intermediate value accesses in HBM. Fortunately, Flash-Attention \cite{dao2022flashattention, dao2023flashattention2} uses IO-aware operations to minimize the number of read and write operations to HBM. By processing the computation in blocks (tiling) and leveraging on-chip SRAM to store intermediate results, Flash-Attention avoids materializing the full N×N attention score matrix in HBM. This dramatically reduces memory traffic, leading to significant speedups and a lower memory footprint. Inspired by this IO-aware approach, our custom kernel also employs a block-wise computation strategy to ensure maximum efficiency, as illustrated in Figure~\ref{fig:method}.

\subsection{Revisiting KV cache Compression in the Era of Long Reasoning Models}

While prior KV cache compression techniques have demonstrated considerable success in the traditional paradigm of long-input, short-output tasks, their core design assumptions are fundamentally challenged by the rise of long reasoning models. The shift to a long-output paradigm exposes several critical limitations in these established methods:

\noindent\textbf{Prefill-Only Compression.} Many methods are confined to the prefill stage, compressing only the initial prompt \citep{pyramidkv}. In long-output scenarios, where most of the tokens are generated during the decoding phase, this approach results in a drastically diminished compression ratio.

\noindent\textbf{High Computational and Memory Overhead.} Existing methods rely on token importance for eviction. The importance evaluation process is computationally expensive, adding significant latency to each compression step \citep{vatp,rkv}. Moreover, these methods often require storing auxiliary metadata, such as attention scores \citep{h2o} or past queries \citep{snapkv}, which consumes extra memory and partially cancels out the benefits of compression.

\noindent\textbf{Poor Compatibility with Modern Fused Kernels.} The performance of modern inference systems heavily relies on operator fusion like Flash-Attention \citep{dao2022flashattention,dao2023flashattention2} or Paged-Attention \citep{vllm}.
However, many compression algorithms are not compatible with these kernels.
Their logic can interrupt the fusion process, forcing data to be moved between SRAM and HBM, or they may require recalculating intermediate results that were already available during the attention step, which can make the overall inference process even slower.

These challenges require a complete redesign of KV cache compression based on the principles of dynamic applicability, lower overhead, and system-level implementation.
\section{Method}

In this section, we introduce LongFlow, a novel KV cache compression method designed to overcome the limitations of prior work. Our approach is built on three pillars: a lightweight design philosophy, a rigorous theoretical derivation, and a high-performance system implementation.

\subsection{A Lightweight Design Philosophy}
The design of LongFlow is guided by a lightweight philosophy, standing in stark contrast to the conventional "look-back" approach of many prior methods. These methods assume that an accurate importance estimation requires aggregating historical information. As previously analyzed, this inherent dependency on historical data leads to prohibitive computational and memory overheads.
To break from this costly paradigm, we built LongFlow upon two core principles:

\noindent\textbf{The Principle of Zero-History Estimation.}
Our central hypothesis is that the current query, $\mathbf{q_t}$ contains sufficient information to effectively estimate the importance of all historical tokens.
To validate this premise, we conduct a preliminary analysis of SnapKV \citep{snapkv}.
We evaluate its performance on the LongBench benchmark \citep{longbench} while reducing the size of its query observation window. The results in Figure \ref{fig:snap} in Appendix show that the performance of SnapKV degrades only slightly as the query window gets smaller, which empirically demonstrates its remarkable robustness.
This evidence highlights the validity of the single-query method for importance estimation, forming the empirical foundation for our lightweight design.

\noindent\textbf{The Principle of Zero-Cost Estimation}
The second principle guiding LongFlow's design is Zero-Cost Estimation. This principle posits that the compression mechanism should not be an expensive, standalone process, but rather an intrinsic and nearly-free byproduct of the main attention computation. This stands in contrast to prior methods that often treat the importance estimation as a distinct step performed after the attention calculation.
Our method implements this principle by deriving the importance metric directly from an intermediate value of the standard attention forward pass which allows us to reuse the values that must be computed for the attention output. 
The reuse eliminates the need for auxiliary storage and ensures that the compression step adds no overhead.

\subsection{Derivation of the Importance Metric}
Our derivation of the importance metric begins with a formal definition of the eviction objective. Ideally, at any given step, we should evict the token whose removal has the minimal impact on the final logits across all future generation steps. This globally optimal objective is computationally intractable. To form a tractable objective, we first simplify our goal to minimizing the impact on the immediate next step's attention output, \( \mathbf{o}_{t+1} \):
\begin{equation}
\label{eq:next_step_objective}
\underset{i}{\arg\min} \left\| \mathbf{o}_{t+1} - \mathbf{o}_{t+1}^{(\setminus i)} \right\|^2,
\end{equation}
where \( \mathbf{o}_{t+1}^{(\setminus i)} \) is the new attention output computed without the key-value pair of token \( t_i \).

Although this objective is greatly simpler, it is still an unrealizable objective as it depends on the future query \( \mathbf{q}_{t+1} \).
Fortunately, we find a good approximation for this objective.
In our preliminary exploration, we find that the adjacent queries (\(\mathbf{q}_t, \mathbf{q}_{t+1}\)) have a high similarity (see Figure \ref{fig:cosine_sim} in Appendix, similar observation can be found in \citet{calidrop}). Intuitively, since the attention output is a weighted sum of Value vectors driven by the attention scores \( \mathbf{q}^T \mathbf{k} \), the strong similarity between adjacent queries implies that the overall attention distribution will be stable. This stability suggests that the impact of evicting a token at step $t$ is an excellent proxy for its impact at step $t+1$ if we omit the softmax denominator.
We thus formalize our practical approximation as follows:
\begin{equation}
\label{eq:greedy_objective}
\resizebox{\columnwidth}{!}{$
\left\| \mathbf{o}_{t+1} - \mathbf{o}_{t+1}^{(\setminus i)} \right\|^2 \approx \left\| \mathbf{o}_{t} - \mathbf{o}_{t}^{(\setminus i)} \right\|^2, \quad \text{where} \ i < t.
$}
\end{equation}
A formal error analysis of this approximation is presented in Section \ref{sec:error_bound_final}.

Directly computing this objective for all candidate tokens remains prohibitively expensive. To derive an efficient proxy, we analyze the structure of the attention output. The exact change in the output \( \mathbf{o}_t \) after evicting a historical token \( t_i \) is:
\begin{equation}
\label{eq:exact_change}
\resizebox{\columnwidth}{!}{$
\Delta \mathbf{o}_t = \mathbf{o}_t - \mathbf{o}_t^{(\setminus i)} = \sum_{j=0}^{t} \frac{\exp(s_t^j)}{Z} \mathbf{v}^j - \sum_{j \neq i} \frac{\exp(s_t^j)}{Z^{(\setminus i)}} \mathbf{v}^j,
$}
\end{equation}
where the unnormalized attention score of token $j$ is \( s_t^j = \mathbf{q}_t^T \mathbf{k}^j / \sqrt{d_k} \), and \( Z = \sum_{l=0}^{t} \exp(s_t^l) \) and \( Z^{(\setminus i)} = \sum_{l \neq i} \exp(s_t^l) \) are the softmax denominators before and after the eviction.

The main complexity arises from the change in this denominator from \(Z\) to \(Z^{(\setminus i)}\).
A further approximation is to omit the impact of eviction on the softmax denominator, assuming \( Z \approx Z^{(\setminus i)} \). This is a simple and reasonable assumption when the number of tokens is large (we discuss the error bound of this approximation in Section \ref{sec:error_bound_final}).
Under this approximation, the change in the output simplifies dramatically to the contribution vector of the evicted token itself:
\begin{equation}
\label{eq:approximated_change}
\begin{split}
\Delta \mathbf{o}_t 
&\approx \sum_{j=0}^{t} \frac{\exp(s_t^j)}{Z} \mathbf{v}^j - \sum_{j \neq i} \frac{\exp(s_t^j)}{Z} \mathbf{v}^j \\
&= \frac{\exp(s_t^i)}{Z} \mathbf{v}^i = \alpha_t^i \mathbf{v}^i,
\end{split}
\end{equation}

where \( \alpha_t^i \) is the attention weight of token $i$.

Thus, the objective is simplified to finding the minimum \( \alpha_t^i \mathbf{v}^i \).
We define the final importance score using the following equation for its computational simplicity and empirical effectiveness:
\begin{equation}
\label{eq:longflow_score}
\text{LongFlowScore}(t_i) = \left\| \alpha_t^i \mathbf{v}^i \right\|_1 =  \alpha_t^i \sum_{l=1}^{d} | (\mathbf{v}^i)_l |.
\end{equation}

The token with the minimum LongFlowScore is selected for eviction.
This method is exceptionally efficient. The contribution vectors \( \mathbf{c}_t^i = \alpha_t^i \mathbf{v}^i \) are a necessary intermediate result in the standard attention computation. Therefore, calculating the LongFlow score only requires performing an additional, lightweight reduction operation on this existing intermediate tensor. This perfectly realizes our Zero-Cost Estimation principle.

\begin{figure*}[t]
\centering
\includegraphics[width=0.95\linewidth]{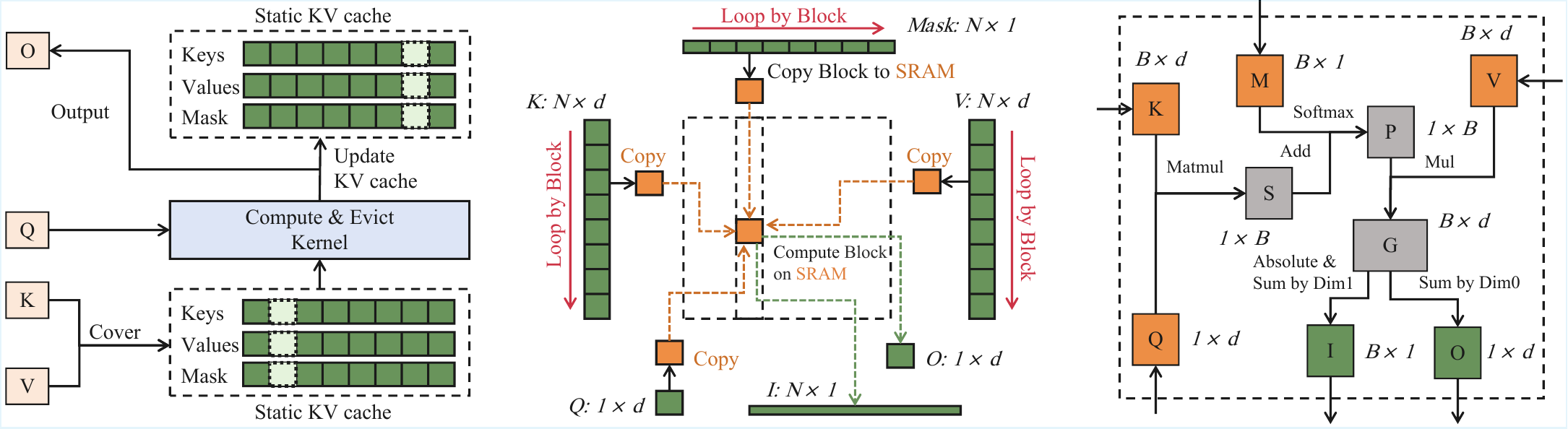}
\caption{The data and computation flow of our method. O: attention output; I: LongFlowScore; S, P and G are intermediate states in kernel forward pass. (\textbf{Left}): The process of a decoding step. The current KV will cover a slot selected in the previous step, and then the static KV and Mask will be sent to the kernel together with Q for calculation to obtain the current attention output and the slot to be covered in the next step. (\textbf{Middle}): The data flow between HBM and SRAM in the kernel. KV and Mask will enter SRAM by block and perform fused attention calculation. (\textbf{Right}): The computational flow on chip. Unlike standard flash attention calculations, we split the matrix multiplication of P and V into two steps and derive LongFlowScore from the intermediate result G.}
\label{fig:method}
\end{figure*}

%
%
%
%
%
%
\subsection{Theoretical Justification of the Approximations}
\label{sec:error_bound_final}

To derive the final LongFlow score from the ideal objective (from Equation \ref{eq:next_step_objective} to Equation \ref{eq:longflow_score}), we introduce two primary approximations: (1) using the contribution vector \( \mathbf{c}_t^i \) as a proxy for the true attention output change \( \Delta\mathbf{o}_t \) (the \emph{denominator approximation}), and (2) using the current query \( \mathbf{q}_t \) as a proxy for the next-step query \( \mathbf{q}_{t+1} \) (the \emph{query approximation}). We now provide a brief analysis of the error bounds for each. The detailed proofs are available in Appendix \ref{app:full_error_bound}.

The total error \( \mathcal{E}_i \) for a given token \( t_i \) is the difference between the ideal next-step objective and our practical score's squared magnitude:
\begin{equation}
\mathcal{E}_i = \left| \left\| \mathbf{o}_{t+1} - \mathbf{o}_{t+1}^{(\setminus i)} \right\|^2 - \left\| \alpha_{t}^i\mathbf{v}_i \right\|^2 \right|.
\end{equation}
Using the triangle inequality, we can decompose this error into two distinct sources:
\begin{equation}
\resizebox{\columnwidth}{!}{$
\mathcal{E}_i \le \underbrace{\left| \left\| \mathbf{o}_{t+1} - \mathbf{o}_{t+1}^{(\setminus i)} \right\|^2 - \left\| \mathbf{c}_{t+1}^i \right\|^2 \right|}_{\text{Error from Denominator Approx.}} + \underbrace{\left| \left\| \mathbf{c}_{t+1}^i \right\|^2 - \left\| \mathbf{c}_{t}^i \right\|^2 \right|}_{\text{Error from Query Drift}}.
$}
\end{equation}

\noindent\textbf{Error from the Denominator Approximation.}
This approximation replaces the true output change with the contribution vector of the evicted token. A rigorous algebraic manipulation shows that the remainder term of this approximation, \( \mathbf{R}_{t+1}^i = \Delta\mathbf{o}_{t+1} - \mathbf{c}_{t+1}^i \), is bounded by \( \|\mathbf{R}_{t+1}^i\| \le \frac{2V\alpha_{t+1}^i}{1-\alpha_{t+1}^i} \), where \(V = \max_j\|\mathbf{v}^j\|\). This bound confirms that the approximation is highly accurate when the attention weight of the evicted token \( \alpha_t^i \) is small. Since our method is designed to evict tokens with low attention, this approximation is well-justified.

\noindent\textbf{Error from the Query Approximation.}
Query approximation uses the score computed with the current query as a proxy for the ideal score that would be computed with the future query, \( \mathbf{q}_{t+1} \). The error introduced by this temporal approximation originates from the difference in attention weights, \( |\alpha_{t+1}^i - \alpha_t^i| \).
Due to the Lipschitz continuity of the softmax function, the change in output weights is bounded by the maximum change in the input pre-softmax scores, \(s_t^j\):
\begin{equation}
|\alpha_{t+1}^i - \alpha_t^i| \le \max_j |s_{t+1}^j - s_t^j|.
\end{equation}
The error in the scores, \( \Delta s^j = s_{t+1}^j - s_t^j \), arises from the query shift. By the Cauchy-Schwarz inequality, this error is bounded by \( |\Delta s^j| \le \frac{\|\mathbf{q}_{t+1} - \mathbf{q}_t\| \cdot \|\mathbf{k}^j\|}{\sqrt{d_k}} \). Assuming queries are normalized to unit vectors, we can relate the query difference to their cosine similarity: \( \|\mathbf{q}_{t+1} - \mathbf{q}_t\|^2 = 2(1 - \text{cos}(\mathbf{q}_t, \mathbf{q}_{t+1})) \). This gives a final bound on the maximum score error:
\begin{equation}
\label{eq:score_error_bound_final}
\max_j |\Delta s^j| \le \frac{\sqrt{2(1 - \text{cos}(\mathbf{q}_t, \mathbf{q}_{t+1}))} \cdot \max_j\|\mathbf{k}^j\|}{\sqrt{d_k}}.
\end{equation}
This bound provides a strong theoretical guarantee. It shows that as the similarity between consecutive queries approaches 1, the error from our temporal approximation approaches 0.

\subsection{High-Performance Implementation}

To realize the theoretical efficiency of LongFlow as practical speedups, we implement a high-performance system centered around a custom fused kernel.
This section details our key system-level optimizations, including the memory management strategy and the fused attention \& eviction kernel. Figure \ref{fig:method} shows the data and computation flow of our method.

\noindent\textbf{Static Memory and Consistent Workload.}
As established in our design philosophy, we employ a static KV cache and a consistent per-step eviction policy. The entire memory for the KV cache is pre-allocated to eliminate dynamic allocation overheads and memory fragmentation. By overwriting a single token at every decoding step, we ensure a consistent computational workload, which is particularly beneficial in distributed systems, as it guarantees a balanced workload and predictable overhead across all parallel workers.

\noindent\textbf{Fused Attention and Eviction Kernel.}
The core of our implementation is a custom Triton kernel that fuses the entire attention and eviction process. While our kernel is designed based on the I/O-aware principles of FlashAttention \citep{dao2022flashattention}, it incorporates three critical modifications to specifically and efficiently handle the auto-regressive decoding process with LongFlow:
(1) Given that the query sequence length is always 1 during decoding, we eliminate the outer loop over the query dimension in standard FlashAttention.
(2) To seamlessly integrate the online calculation of the LongFlowScore, we omit the running maximum used for numerical stability in safe softmax. To compensate and prevent potential overflow, we perform the calculation of softmax in FP32.
(3) We restructure the computation to maximize data reuse for both tasks. Within the single loop over KV cache blocks, we compute the un-normalized contribution vectors $exp(scores)V$ as a key intermediate result. This tensor is then summed along the sequence dimension to update the final attention output accumulator, and its L1-norm is calculated to serve as the un-normalized LongFlowScore for that block.
These modifications result in a single, highly-optimized operator that computes both the attention output and the token to evict in one pass, as detailed in Algorithm \ref{alg:longflow_kernel} in Appendix.
\section{Experiments}

We conduct a series of experiments to comprehensively evaluate the effectiveness of LongFlow. Our evaluation focuses on three key aspects: model quality, inference speed, and memory efficiency.

\subsection{Experimental Setup}
\label{sec:setup}
\noindent\textbf{Models.}
We evaluate LongFlow on different reasoning models to demonstrate its applicability.
Our experiments are primarily conducted on \textbf{DeepSeek-R1-Distill-Llama-8B} \citep{deepseekr1} and the \textbf{Qwen3} \citep{qwen3} series, including its 0.6B, 1.7B, 4B, and 8B variants.

\noindent\textbf{Datasets.}
Our evaluation spans a wide spectrum of challenging reasoning benchmarks to ensure a comprehensive assessment of model accuracy. For competition-level mathematics, we use problems from MATH-500 \citep{hendrycks2021measuring}, AMC-23 \citep{amc}, AIME-24 \citep{aime2024} and AIME-25 \citep{aime2025}. To test advanced scientific reasoning, we employ the graduate-level expert QA from GPQA \citep{rein2024gpqa}, university-level problems from Minerva \citep{lewkowycz2022solving}, and Olympiad-level questions from OlympiadBench \citep{he2024olympiadbench}. Finally, we include the widely-used GSM8K benchmark \citep{cobbe2021training} for grade-school math word problems.

\noindent\textbf{Baselines.}
We compare LongFlow against several representative KV cache compression methods:
\begin{itemize}[itemsep=0pt,topsep=0pt,parsep=0pt]
    \item \textbf{Vanilla:} The standard attention without any KV cache management. It serves as the accuracy upper bound but represents the worst scenario for memory consumption and latency.
    
    \item \textbf{H2O}\citep{h2o}: A classic importance-based method that prioritizes tokens based on accumulated attention scores.
    
    \item \textbf{VATP}\citep{vatp}: An advanced method that integrates both attention scores and Value vector information into its importance metric.
    
    \item \textbf{R-KV}\citep{rkv}: A recent method also designed for reasoning models. It incorporates information about repetitive patterns in the generation to guide its eviction policy.
\end{itemize}

\noindent\textbf{Evaluation Metrics.}
We evaluate all methods across two primary dimensions. For system performance, we measure the inference throughput, peak GPU memory usage, and memory fragments. For model accuracy, we report the task-specific accuracy on each of the benchmarks.

\noindent\textbf{Implementation Details.}
For our main experiments, the generation output length is set to 16,000 tokens. For all compression methods, we set the KV cache budget to be either 3,200 or 2,400 tokens.
For H2O and VATP, this budget is evenly split between heavy-hitter tokens and recent tokens.
For LongFlow, if the number of tokens in the prefill stage exceeds the budget, we first use SnapKV \citep{snapkv} to compress the tokens to the budget size.
For all the baselines, we set the compression interval to achieve a balance between compression ratio and inference speed.
All the experiments are conducted on NVIDIA A100 40GB GPUs.

\subsection{Main Results on Model Accuracy}
\begin{table*}[t]
\centering
\small
\renewcommand{\arraystretch}{1.2} 
\setlength{\tabcolsep}{6pt} 
\begin{tabular}{l|ccccccccc}
\toprule
\multirow{2}{*}{\parbox{1.8cm}{\centering\textbf{Model}}} & \multirow{2}{*}{\textbf{Method}} & \textbf{AIME24} & \textbf{AIME25} & \textbf{AMC} & \textbf{GPQA} & \textbf{GSM8K} & \textbf{MATH} & \textbf{Minerva} & \textbf{Olympiad}  \\
& & \textbf{(30)} & \textbf{(30)} & \textbf{(40)} & \textbf{(198)} & \textbf{(1318)} & \textbf{(500)} & \textbf{(272)} & \textbf{(675)} \\
\hline
\multirow{13}{*}{\parbox{1.8cm}{\centering\textbf{\shortstack{DeepSeek-R1\\-Distill1\\-Llama-8B}}}}
 & \multicolumn{9}{c}{\cellcolor{cyan!5}\textbf{Budget = 2400}} \\
& Vanilla & 30.00 & 20.00 & 77.50 & 41.41 & 77.48 & 83.40 & 20.59 & 44.30 \\
& H2O & \textbf{43.33} & \textbf{20.00} & 65.00 & \textbf{41.41} & \textbf{77.71} & 79.80 & 19.12 & 44.00 \\
& R-KV & 36.67 & 20.00 & \textbf{75.00} & 38.38 & 77.41 & \textbf{80.80} & 20.22 & \textbf{45.93} \\
& VATP & 33.33 & 13.33 & 72.50 & 33.84 & 77.48 & 79.80 & \textbf{20.59} & 43.56 \\
& Ours & 33.33 & 16.67 & 70.00 & 40.40 & 77.71 & 79.80 & 20.22 & 43.85 \\
 & \multicolumn{9}{c}{\cellcolor{cyan!5}\textbf{Budget = 3200}} \\
& Vanilla & 30.00 & 20.00 & 77.50 & 41.41 & 77.47 & 83.40 & 20.59 & 44.30 \\
& H2O & 36.67 & 23.33 & 72.50 & 38.38 & \textbf{77.69} & 81.60 & \textbf{21.32} & 46.22  \\
& R-KV & 30.00 & 23.33 & 77.50 & \textbf{42.93} & 77.62 & \textbf{82.40} & 19.49 & \textbf{46.81}  \\
& VATP & \textbf{40.00} & 16.67 & 72.50 & 42.42 & 77.39 & 81.60 & 19.12 & 46.67  \\
& Ours & 33.33 & \textbf{26.67} & \textbf{82.50} & 40.91 & 77.62 & 79.80 & 20.22 & 45.93 \\
\hline
\multirow{13}{*}{\parbox{1.8cm}{\centering\textbf{Qwen3-8B}}}
 & \multicolumn{9}{c}{\cellcolor{cyan!5}\textbf{Budget = 2400}} \\
& Vanilla & 60.00 & 46.67 & 90.00 & 34.85 & 95.68 & 92.60 & 33.46 & 58.22 \\
& H2O & 40.00 & 20.00 & 65.00 & 26.77 & 95.98 & 85.20 & 32.35 & 49.04  \\
& R-KV & 33.33 & 23.33 & \textbf{85.00} & 28.79 & 96.13 & \textbf{89.00} & \textbf{32.72} & \textbf{53.33} \\
& VATP & \textbf{40.00} & 26.67 & 77.50 & 23.23 & \textbf{96.29} & 87.60 & 31.99 & 49.33 \\
& Ours & 36.67 & \textbf{26.67} & 80.00 & \textbf{30.81} & 95.83 & 88.00 & 31.25 & 51.11 \\
 & \multicolumn{9}{c}{\cellcolor{cyan!5}\textbf{Budget = 3200}} \\
& Vanilla & 60.00 & 46.67 & 90.00 & 34.85 & 95.68 & 92.60 & 33.46 & 58.22 \\
& H2O & 46.67 & 26.67 & 80.00 & 27.78 & 96.36 & 90.40 & \textbf{35.29} & 53.78 \\
& R-KV & 50.00 & \textbf{33.33} & 82.50 & \textbf{32.32} & 96.36 & \textbf{91.60} & 33.46 & \textbf{55.56} \\
& VATP & 43.33 & 33.33 & 82.50 & 30.30 & \textbf{96.43} & 91.00 & 34.19 & 55.41 \\
& Ours & \textbf{50.00} & 26.67 & \textbf{85.00} & 30.30 & 96.36 & 89.80 & 34.19 & 55.56 \\
\bottomrule
\end{tabular}
\caption{Model Performance on different models across different datasets. Numbers in parentheses indicate dataset sizes. \textbf{Bold} indicates the best performance.}
\label{tab:model_performance}
\end{table*}

We present our main results on model accuracy in Table \ref{tab:model_performance}. The evaluation across two distinct model families and two KV cache budget settings demonstrates that LongFlow effectively preserves the reasoning capabilities of the base models, achieving performance that is highly competitive with state-of-the-art baselines and close to the uncompressed Vanilla model.

\noindent\textbf{Comparison With the Baselines.}
Across both models and budget settings, our proposed LongFlow demonstrates highly competitive performance against state-of-the-art baselines. R-KV generally achieves the highest average accuracy (with more memory for redundant token identification), establishing itself as a strong baseline focused on quality preservation. LongFlow consistently performs on par with or better than other methods like VATP and H2O. 

\noindent\textbf{Comparison with Full KV and Different Cache Budgets.}
A key finding is that all compression methods, including LongFlow, successfully preserve the vast majority of the original model's capabilities while operating on a significantly reduced KV cache (e.g., a 3.2k budget representing an 80\% compression ratio for a 16k generation).
As shown in Table \ref{tab:model_performance}, the average performance degradation compared to the uncompressed Vanilla baseline is negligible to minor—as low as 0.08\% for DeepSeek-R1-Distill-Llama-8B and approximately 1.3\% for Qwen3-8B.
When the KV cache budget is tightened from 3.2k to 2.4k, a graceful degradation is observed across all methods. LongFlow performs better under this pressure, showing less degradation than both H2O and VATP.

\subsection{Performance on Different Model Sizes}
To assess the performance of LongFlow across different model sizes, we conduct a comparative analysis using the Qwen3 model family, including the 0.6B, 1.7B, 4B, and 8B variants.
For this specific analysis, the AIME-24 and AIME-25 datasets were excluded due to their small sample sizes and for brevity.
The results in Figure \ref{fig:all_images} (in Appendix \ref{sec:additional figures}) show that LongFlow maintains a highly competitive performance, achieving results that are comparable or superior to the baselines across all tested model sizes and benchmarks, which demonstrates the effectiveness and robustness of our method.

\begin{figure*}[t]
\centering
\begin{subfigure}[b]{0.32\textwidth}
    \includegraphics[width=\textwidth]{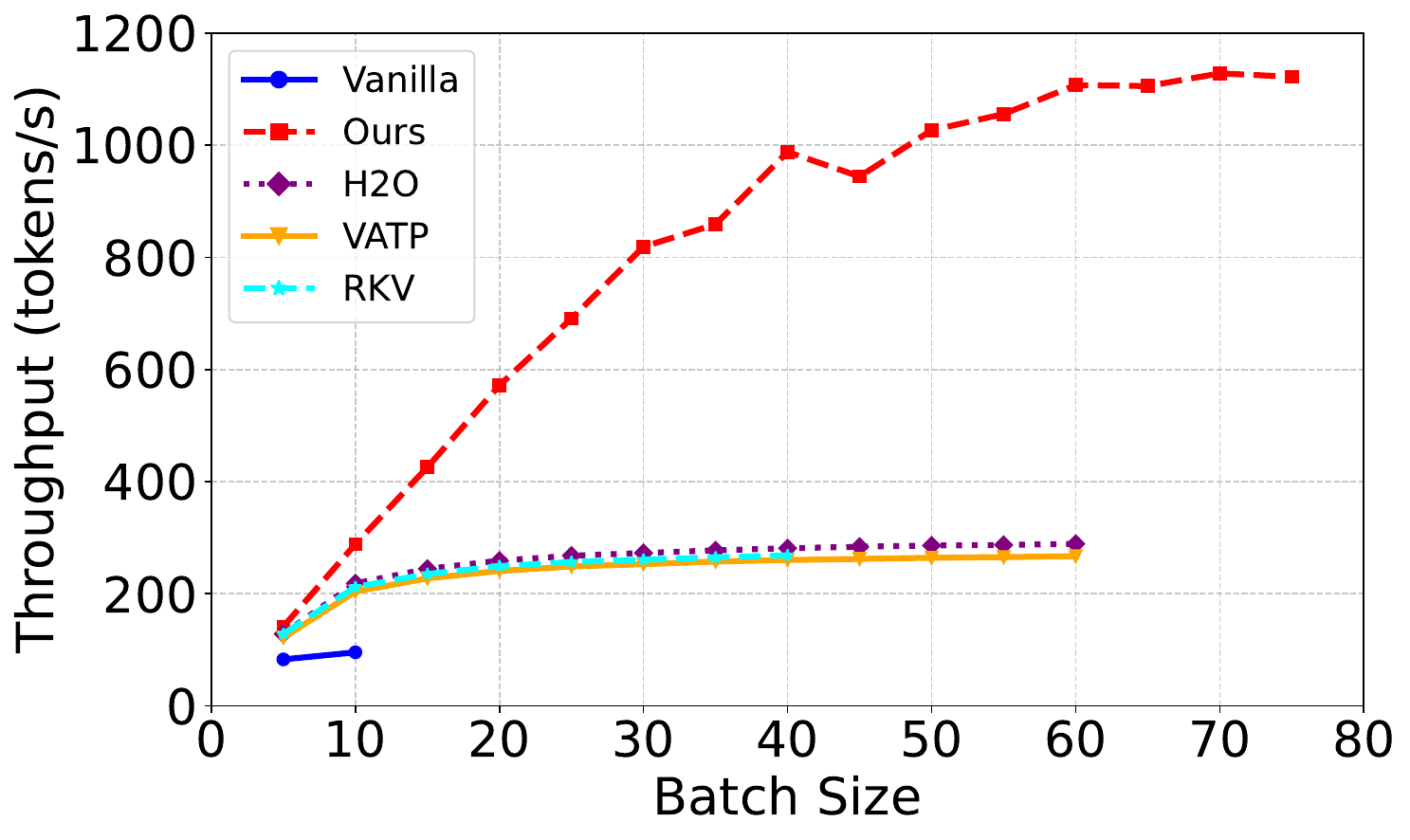}
    \caption{Throughput}
\end{subfigure}
\begin{subfigure}[b]{0.32\textwidth}
    \includegraphics[width=\textwidth]{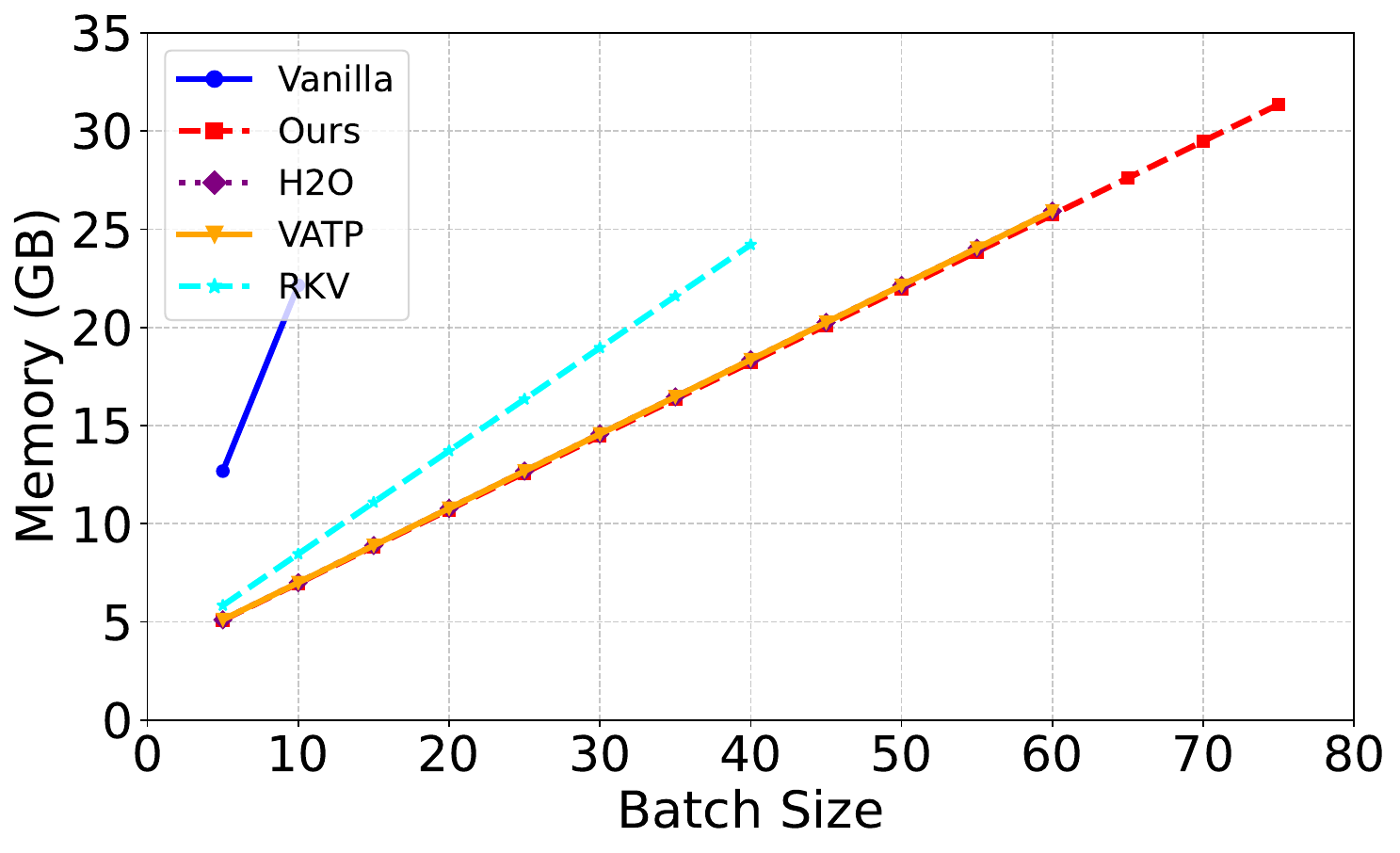}
    \caption{Memory usage}
\end{subfigure}
\begin{subfigure}[b]{0.32\textwidth}
    \includegraphics[width=\textwidth]{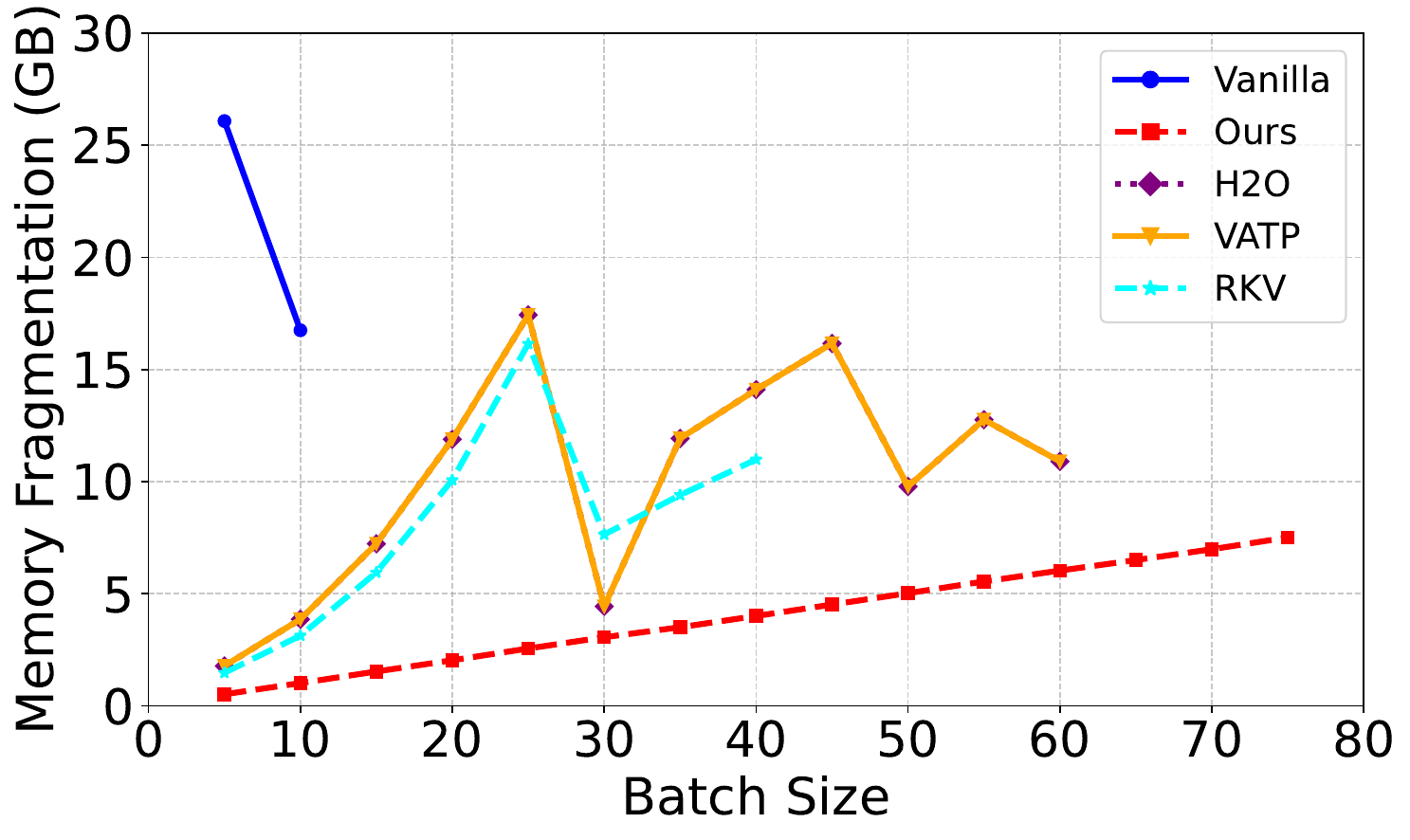}
    \caption{Memory fragment}
\end{subfigure}
\caption{Performance comparison of LongFlow against baselines. The compression is conducted every step for LongFlow and every 128 steps for other methods. LongFlow achieves higher throughput and supports a larger maximum batch size due to superior memory management.}
\vspace{-10pt}
\label{fig:memory_and_speed}
\end{figure*}

\subsection{Throughput and Memory Analysis}
Finally, we evaluate the system-level performance of LongFlow in terms of throughput and memory efficiency. For this analysis, we use the Qwen3-1.7B on a single NVIDIA A100 40GB GPU. We configure a scenario with an input length of 512 and an output length of 16,000. We set the KV cache budget to 3,200 for all compression methods. To measure the maximum performance, we progressively increase the batch size for each method until an Out-of-Memory (OOM) error occurs. We report the throughput, peak memory footprint, and memory fragments in Figure \ref{fig:memory_and_speed}.

The results clearly demonstrate LongFlow's substantial advantages in both throughput and memory management. In terms of throughput, LongFlow outperforms FullKV by a remarkable 11.8 times and is approximately 4.0 times faster than other methods.
Although LongFlow performs KV cache compression at every decoding step, its algorithmic efficiency and fused kernel implementation substantially reduce per-step overhead, enabling remarkably higher throughput than competing methods.
Regarding peak memory usage, LongFlow's footprint is comparable to other methods under the same budget, while R-KV consumes slightly more due to its more complex importance estimation logic.
Notably, LongFlow's advantage in memory management lies in its superior handling of fragmentation. Its static memory scheme and consistent per-step eviction policy result in a memory layout that is both less fragmented and highly predictable.
This less fragmented memory layout allows LongFlow to support a larger maximum batch size than competing methods.
The predictability makes LongFlow more stable and reliable in real-world deployment by eliminating runtime memory-related uncertainties,.

\section{Related Work}

\paragraph{Reasoning Models}
While traditional LLMs perform well on general tasks, achieving breakthroughs in complex logical reasoning has remained a significant challenge. This bottleneck has recently been addressed by a new class of models named reasoning models designed for long CoT generation.
Pioneered by OpenAI-o1 \citep{openaio1}, DeepSeek-R1 \citep{deepseekr1} leverages Reinforcement Learning with Verifiable Rewards (RLVR) to unlock the model's latent abilities for long CoT generation and high-level reasoning.
These breakthroughs catalyze a subsequent wave of increasingly powerful reasoning models from numerous research labs, including Qwen3 \citep{qwen3}, Gemini-2.5 \citep{gemini2.5}, and GPT-5 \citep{gpt5}. Researchers have employed advanced training strategies to develop increasingly powerful models that mark new milestones.
However, the long-CoT reasoning paradigm introduces significant deployment challenges. Generating lengthy sequences inflates the KV cache, leading to severe memory and computational bottlenecks. This problem is further exacerbated by the models' tendency to overthink \citep{overthinking}. While numerous approaches have been proposed to develop more efficient reasoning models \citep{overthinking1, speculativecot, jiang2025think}, they often focus on shortening the CoT, which risks constraining the model's reasoning capability. Developing more effective methods remains an open challenge.

\paragraph{KV Cache Compression}
KV cache compression is a widely used technique for improving the efficiency of LLM inference by retaining essential contextual information under limited memory budgets. Existing methods typically exploit attention patterns to identify and evict less important tokens, or adopt more explicit objectives that approximate attention output similarity \citep{ada}.
Despite their effectiveness in long-input scenarios, these eviction-based methods suffer from irreversible information loss and often incur non-trivial computational overhead, making them less suitable for long-output generation. To address this, more recent approaches construct compact representations instead of directly discarding tokens, for example through semantic chunking \citep{chunkkv} or cross-layer merging \citep{minicache}. Other methods periodically refresh cached representations to mitigate permanent information loss \citep{refreshkv}. While promising, these techniques typically introduce additional computation or architectural complexity. In contrast, our work focuses on a fundamentally lightweight KV cache compression method specifically tailored to the long-output setting of reasoning models.

\section{Conclusion}
This paper introduces LongFlow, a novel and lightweight KV cache compression method designed to mitigate the significant inference overhead in long-output reasoning models. Grounded in a zero-history and zero-cost design philosophy, LongFlow employs a highly efficient, theoretically-justified importance metric derived directly from intermediate attention values, incurring negligible computational cost.
With our co-designed system, Longflow increase throughput by up to 11.8x and reduce the KV cache footprint by 80\%, while maintaining high model accuracy. By striking an effective balance between performance and accuracy, LongFlow presents a practical pathway toward the efficient deployment of reasoning models.
\section*{Limitations}
Despite strong empirical and system-level efficiency, LongFlow has several limitations.

\paragraph{Dependence on query stability.}
LongFlow estimates token importance using only the current-step query and intermediate attention results. This design is most reliable when consecutive decoding queries are similar, which often holds in long chain-of-thought generation. However, in settings with abrupt distribution shifts (e.g., topic switches, tool-use interleaving, or highly stochastic decoding), the current query may be a weaker proxy for near-future behavior, potentially making eviction decisions less optimal.

\paragraph{Limited to long-output decoding.}
LongFlow is optimized for autoregressive decoding with a query length of one and frequent per-step cache updates. It is not directly tailored to regimes dominated by long-input prefill, bidirectional attention, or non-autoregressive generation. A unified approach that jointly optimizes prefill and long-output decoding remains future work.

\bibliography{custom}

\clearpage
\appendix

\section{Detailed Error Bound Derivation}
\label{app:full_error_bound}


In this section, we provide the detailed proofs for the two key approximations that underpin the LongFlow method, justifying the claims made in Section \ref{sec:error_bound_final}. Our goal is to bound the total error \( \mathcal{E}^i = \left| \left\| \mathbf{o}_{t+1} - \mathbf{o}_{t+1}^{(\setminus i)} \right\|^2 - \left\| \mathbf{c}_t^i \right\|^2 \right| \), which we decompose using the triangle inequality:
\begin{equation}
\resizebox{\columnwidth}{!}{$
    \mathcal{E}^i \le \underbrace{\left| \left\| \mathbf{o}_{t+1} - \mathbf{o}_{t+1}^{(\setminus i)} \right\|^2 - \left\| \mathbf{c}_{t+1}^i \right\|^2 \right|}_{\text{Error from Denominator Approx.}} + \underbrace{\left| \left\| \mathbf{c}_{t+1}^i \right\|^2 - \left\| \mathbf{c}_t^i \right\|^2 \right|}_{\text{Error from Query Drift}}.
$}
\end{equation}
We now derive explicit bounds for each of these two error terms.

\subsection{Bounding the Error from the Denominator Approximation}

This error arises from approximating the true change in attention output, \( \Delta\mathbf{o}_{t+1} = \mathbf{o}_{t+1} - \mathbf{o}_{t+1}^{(\setminus i)} \), with the contribution vector \( \mathbf{c}_{t+1}^i = \alpha_{t+1}^i \mathbf{v}^i \). We now derive an exact expression for the remainder term \( \mathbf{R}_{t+1}^i = \Delta\mathbf{o}_{t+1} - \mathbf{c}_{t+1}^i \).

First, let us expand the terms. The original attention output is a sum over all \(t\) historical tokens, while the output after eviction is a sum over the remaining \(t-1\) tokens with re-normalized attention weights:
\begin{align*}
    \mathbf{o}_{t+1} &= \sum_{j=1}^{t} \alpha_{t+1}^j \mathbf{v}^j = \alpha_{t+1}^i \mathbf{v}^i + \sum_{j \neq i} \alpha_{t+1}^j \mathbf{v}^j \\
    \mathbf{o}_{t+1}^{(\setminus i)} &= \sum_{j \neq i} \alpha_{t+1}^{j, (\setminus i)} \mathbf{v}^j
\end{align*}
Substituting these into the definition of the remainder term:
\begin{align*}
    \mathbf{R}_{t+1}^i &= \Delta\mathbf{o}_{t+1} - \mathbf{c}_{t+1}^i \\
    &= \left( \mathbf{o}_{t+1} - \mathbf{o}_{t+1}^{(\setminus i)} \right) - \mathbf{c}_{t+1}^i \\
    &= \biggl( \alpha_{t+1}^i \mathbf{v}^i + \sum_{j \neq i} \alpha_{t+1}^j \mathbf{v}^j \biggr) \\
    &\quad - \biggl( \sum_{j \neq i} \alpha_{t+1}^{j, (\setminus i)} \mathbf{v}^j \biggr) - \alpha_{t+1}^i \mathbf{v}^i \\
    &= \sum_{j \neq i} \left( \alpha_{t+1}^j - \alpha_{t+1}^{j, (\setminus i)} \right) \mathbf{v}^j.
\end{align*}

The key is to relate the new weights \( \alpha_{t+1}^{j, (\setminus i)} \) to the old weights \( \alpha_{t+1}^j \). The new weights are re-normalized over a smaller set of tokens:
\begin{equation}
\begin{split}
    \alpha_{t+1}^{j, (\setminus i)} &= \frac{\exp(s_{t+1}^j)}{\sum_{l \neq i} \exp(s_{t+1}^l)} \\
    &= \frac{\exp(s_{t+1}^j)}{Z_{t+1} - \exp(s_{t+1}^i)} \\
    &= \frac{\exp(s_{t+1}^j)/Z_{t+1}}{(Z_{t+1} - \exp(s_{t+1}^i))/Z_{t+1}} \\
    &= \frac{\alpha_{t+1}^j}{1 - \alpha_{t+1}^i}.
\end{split}
\end{equation}

Now, we substitute this identity back into the expression for the remainder term:
\begin{equation}
\begin{split}
    \mathbf{R}_{t+1}^i &= \sum_{j \neq i} \left( \alpha_{t+1}^j - \frac{\alpha_{t+1}^j}{1 - \alpha_{t+1}^i} \right) \mathbf{v}^j \\
    &= \sum_{j \neq i} \left( \frac{-\alpha_{t+1}^j \alpha_{t+1}^i}{1 - \alpha_{t+1}^i} \right) \mathbf{v}^j \\
    &= -\frac{\alpha_{t+1}^i}{1 - \alpha_{t+1}^i} \sum_{j \neq i} \alpha_{t+1}^j \mathbf{v}^j.
\end{split}
\end{equation}
Finally, we recognize that the remaining summation is simply the original attention output minus the contribution of the \(i\)-th token: \( \sum_{j \neq i} \alpha_{t+1}^j \mathbf{v}^j = \mathbf{o}_{t+1} - \mathbf{c}_{t+1}^i \). This gives the final exact expression for the remainder term:
\begin{equation}
    \mathbf{R}_{t+1}^i = -\frac{\alpha_{t+1}^i}{1-\alpha_{t+1}^i} \left( \mathbf{o}_{t+1} - \mathbf{c}_{t+1}^i \right).
\end{equation}
Taking the norm and assuming \( \|\mathbf{v}^j\| \le V \), we have \( \|\mathbf{o}_{t+1}\| \le V \). The norm of the remainder is then bounded:
\begin{equation}
\begin{split}
    \|\mathbf{R}_{t+1}^i\| &\le \frac{\alpha_{t+1}^i}{1-\alpha_{t+1}^i} \left( \|\mathbf{o}_{t+1}\| + \|\mathbf{c}_{t+1}^i\| \right) \\
    &\le \frac{\alpha_{t+1}^i(1+\alpha_{t+1}^i)}{1-\alpha_{t+1}^i} V \\
    &\le \frac{2V\alpha_{t+1}^i}{1-\alpha_{t+1}^i}.
\end{split}
\end{equation}

\subsection{Bounding the Error from the Query Approximation}

This error arises from query drift, captured by the term \( \left| \|\mathbf{c}_{t+1}^i\|^2 - \|\mathbf{c}_t^i\|^2 \right| = \left| (\alpha_{t+1}^i)^2 - (\alpha_t^i)^2 \right| \cdot \|\mathbf{v}^i\|^2 \). The core task is to bound \( |\alpha_{t+1}^i - \alpha_t^i| \).

\begin{lemma}[Lipschitz Property of Softmax]
\label{lemma:softmax_lipschitz}
The softmax function \( \sigma(\mathbf{s})_i = \exp(s^i)/\sum_j \exp(s^j) \) is 1-Lipschitz continuous with respect to the L1-norm on its output and the L-infinity norm on its input.
\end{lemma}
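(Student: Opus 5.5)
The plan is to bound the Jacobian of the softmax uniformly in the $\ell_\infty\to\ell_1$ operator norm and then integrate along a line segment. Write $\mathbf{p}=\sigma(\mathbf{s})$. A direct differentiation of $\exp(s^i)/\sum_j\exp(s^j)$ gives the Jacobian
\begin{equation*}
J(\mathbf{s}) = \mathrm{diag}(\mathbf{p}) - \mathbf{p}\mathbf{p}^T .
\end{equation*}
For any two score vectors $\mathbf{s},\mathbf{s}'$, the fundamental theorem of calculus along $\mathbf{s}_\tau=\mathbf{s}+\tau(\mathbf{s}'-\mathbf{s})$, $\tau\in[0,1]$, gives
\begin{equation*}
\sigma(\mathbf{s}')-\sigma(\mathbf{s})=\int_0^1 J(\mathbf{s}_\tau)(\mathbf{s}'-\mathbf{s})\,d\tau .
\end{equation*}
Hence it suffices to show $\|J(\mathbf{s})\mathbf{u}\|_1\le\|\mathbf{u}\|_\infty$ for every $\mathbf{s}$ and $\mathbf{u}$.

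The central step is this operator-norm bound. By homogeneity, assume $\|\mathbf{u}\|_\infty\le 1$. Set $\bar u=\mathbf{p}^T\mathbf{u}=\sum_j p_j u_j$. Then $(J\mathbf{u})_i=p_i(u_i-\bar u)$, and so
\begin{equation*}
\|J\mathbf{u}\|_1=\sum_i p_i|u_i-\bar u| .
\end{equation*}
This is the mean absolute deviation of a random variable $U$ that takes value $u_i$ with probability $p_i$. The naive triangle-inequality estimate only gives $2$, which is where care is needed. Instead, apply Jensen/Cauchy–Schwarz:
\begin{equation*}
\sum_i p_i|u_i-\bar u|\le\sqrt{\mathrm{Var}(U)}\le\sqrt{\mathbb{E}[U^2]}\le 1,
\end{equation*}
using $|u_i|\le 1$. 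This gives the constant $1$.

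Combining the two steps yields
\begin{equation*}
\|\sigma(\mathbf{s}')-\sigma(\mathbf{s})\|_1\le\int_0^1\|\mathbf{s}'-\mathbf{s}\|_\infty\,d\tau=\|\mathbf{s}'-\mathbf{s}\|_\infty ,
\end{equation*}
which is the claimed 1-Lipschitz property. In particular, each single coordinate satisfies $|\alpha_{t+1}^i-\alpha_t^i|\le\|\sigma(\mathbf{s}_{t+1})-\sigma(\mathbf{s}_t)\|_1\le\max_j|s_{t+1}^j-s_t^j|$, as used in Section~\ref{sec:error_bound_final}.

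The main obstacle is securing the sharp constant $1$ rather than $2$, and the variance argument above handles it. A side remark could further sharpen the bound to half the oscillation $\tfrac12(\max_j\Delta s^j-\min_j\Delta s^j)$, using the shift invariance of softmax, but this refinement is not needed.
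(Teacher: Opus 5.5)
Your proof is correct, and at the decisive step it is more complete than the paper's. Both arguments start from the Jacobian $J_{ij}=\sigma_i(\delta_{ij}-\sigma_j)$ and a line-segment argument. The paper, however, uses an equality form of the mean value theorem, $\sigma(\mathbf{s}')-\sigma(\mathbf{s})=J_\sigma(\mathbf{c})(\mathbf{s}'-\mathbf{s})$ for some $\mathbf{c}$ on the segment, which does not hold for vector-valued maps in general. Your integral form is the valid version and is all that is needed.

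More importantly, the paper never bounds the $\ell_\infty\to\ell_1$ operator norm of $J$. It cites $\|\sigma(\mathbf{s}')-\sigma(\mathbf{s})\|_1\le\|\mathbf{s}'-\mathbf{s}\|_\infty$ as a ``known property.'' For ``insight'' it then computes only the induced $\ell_1$ norm via column sums, $\sum_i|J_{ij}|=2\sigma_j(1-\sigma_j)\le 1/2$. That gives $\|\Delta\sigma\|_1\le\frac12\|\Delta\mathbf{s}\|_1$, a different norm pairing. Since $\|\Delta\mathbf{s}\|_1$ can be $n$ times $\|\Delta\mathbf{s}\|_\infty$, it does not yield the $\max_j|s^j_{t+1}-s^j_t|$ bound used in Section~\ref{sec:error_bound_final}. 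It would also make the query-drift bound grow with the number of cached tokens.

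Your identity $\|J\mathbf{u}\|_1=\sum_i p_i|u_i-\bar u|$ exploits the diagonal-minus-rank-one structure $\mathrm{diag}(\mathbf{p})-\mathbf{p}\mathbf{p}^T$. It turns an operator norm that is awkward for general matrices into a mean absolute deviation. Cauchy--Schwarz then gives $\sum_i p_i|u_i-\bar u|\le\sqrt{\mathrm{Var}_p(\mathbf{u})}\le\|\mathbf{u}\|_\infty$, which is exactly the step the paper leaves unproved. Your remark that $J\mathbf{1}=\mathbf{0}$ further sharpens the constant to half the oscillation of $\Delta\mathbf{s}$. In short, the paper's column-sum computation establishes only the $\ell_1\to\ell_1$ constant $1/2$, while your argument proves the lemma as stated.
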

\begin{proof}
By the Mean Value Theorem for vector-valued functions, for any two score vectors \( \mathbf{s} \) and \( \mathbf{s}' \), there exists a point \( \mathbf{c} \) on the line segment between them such that:
\[
\sigma(\mathbf{s}') - \sigma(\mathbf{s}) = J_{\sigma}(\mathbf{c}) (\mathbf{s}' - \mathbf{s})
\]
where \( J_{\sigma} \) is the Jacobian matrix of the softmax function. Our goal is to bound the norm of this Jacobian. The entries of the Jacobian, \( J_{ij} = \frac{\partial \sigma_i}{\partial s^j} \), are computed as follows:

For the diagonal entries (\(i = j\)):
\begin{align*}
    \frac{\partial \sigma_i}{\partial s^i} &= \frac{\partial}{\partial s^i} \left( \frac{\exp(s^i)}{\sum_k \exp(s^k)} \right) \\
    &= \frac{\exp(s^i) \left(\sum_k \exp(s^k)\right) - \exp(s^i) \exp(s^i)}{(\sum_k \exp(s^k))^2} \\
    &= \frac{\exp(s^i)}{\sum_k \exp(s^k)} \left( 1 - \frac{\exp(s^i)}{\sum_k \exp(s^k)} \right) \\
    &= \sigma_i (1 - \sigma_i).
\end{align*}
For the off-diagonal entries (\(i \neq j\)):
\begin{align*}
    \frac{\partial \sigma_i}{\partial s^j} &= \frac{\partial}{\partial s^j} \left( \frac{\exp(s^i)}{\sum_k \exp(s^k)} \right) \\
    &= \frac{0 - \exp(s^i) \exp(s^j)}{(\sum_k \exp(s^k))^2} \\
    &= - \frac{\exp(s^i)}{\sum_k \exp(s^k)} \frac{\exp(s^j)}{\sum_k \exp(s^k)} \\ 
    &= -\sigma_i \sigma_j.
\end{align*}
Thus, the Jacobian can be succinctly written as \( J_{ij} = \sigma_i (\delta_{ij} - \sigma_j) \), where \( \delta_{ij} \) is the Kronecker delta.

To obtain the bound used in the main text, we need to bound the induced L1/L\(\infty\) norm. It is a known property of the softmax function that \( \|\sigma(\mathbf{s}') - \sigma(\mathbf{s})\|_1 \le \|\mathbf{s}' - \mathbf{s}\|_\infty \), which means the Lipschitz constant for this norm pairing is 1. While the full proof of this specific norm bound is lengthy and typically presented in optimization literature, we can gain insight by analyzing the simpler induced L1-norm, \( \|J\|_1 = \max_j \sum_i |J_{ij}| \). The sum of the absolute values of the \(j\)-th column is:
\begin{align*}
    \sum_i |J_{ij}| &= |\sigma_j(1-\sigma_j)| + \sum_{i \neq j} |-\sigma_i \sigma_j| \\
    &= \sigma_j(1-\sigma_j) + \sum_{i \neq j} \sigma_i \sigma_j \quad (\text{since } \sigma_k \in [0,1]) \\
    &= \sigma_j - \sigma_j^2 + \sigma_j \sum_{i \neq j} \sigma_i \\
    &= \sigma_j - \sigma_j^2 + \sigma_j (1 - \sigma_j) = 2\sigma_j(1-\sigma_j).
\end{align*}
Since the maximum value of the function \( f(x) = 2x(1-x) \) for \( x \in [0,1] \) is \( 1/2 \) (at \(x=1/2\)), the induced L1-norm is bounded by \( \|J\|_1 \le 1/2 \). The bound of 1 for the L1/\(L^\infty\) norm pairing used in our main analysis is a tighter, more specific result.
\end{proof}

Applying the Lemma, the change in a single attention weight is bounded by the maximum change in any pre-softmax score:
\begin{equation}
    |\alpha_{t+1}^i - \alpha_t^i| \le \max_j |s_{t+1}^j - s_t^j|.
\end{equation}
The error in the scores, \(s_{t+1}^j - s_t^j = (\mathbf{q}_{t+1} - \mathbf{q}_t)^T \mathbf{k}^j / \sqrt{d_k} \), is bounded by the Cauchy-Schwarz inequality. Assuming unit-normalized queries, we have \( \|\mathbf{q}_{t+1} - \mathbf{q}_t\|^2 = 2(1 - \text{cos}(\mathbf{q}_t, \mathbf{q}_{t+1})) \). This gives the final bound on the maximum score error:
\begin{equation}
    \max_j |\Delta s^j| \le \frac{\sqrt{2(1 - \text{cos}(\mathbf{q}_t, \mathbf{q}_{t+1}))} \cdot \max_j\|\mathbf{k}^j\|}{\sqrt{d_k}}.
\end{equation}

\subsection{Combined Error Bound}
By combining the bounds for both error sources, we can conclude that the total error \( \mathcal{E}^i \) is small when our method evicts a token with a small attention weight \( \alpha_{t+1}^i \) and when the query similarity is high. This provides a rigorous theoretical foundation for the effectiveness of the LongFlow method.

\section{Algorithm of LongFlow Kernel}
Algorithm \ref{alg:longflow_kernel} shows the workflow of our kernel.
\begin{algorithm}[htbp]
\caption{LongFlow Fused Attention and Eviction Kernel}
\label{alg:longflow_kernel}
\begin{small}
\begin{algorithmic}[1]
\STATE \textbf{Input:} Query \( \mathbf{q}_t \in \mathbb{R}^{1 \times d} \), KV cache \( \mathbf{K} \in \mathbb{R}^{(t-1) \times d} \), \( \mathbf{V} \in \mathbb{R}^{(t-1) \times d} \), Mask \( \mathbf{M} \in \{0,1\}^{t-1} \)
\STATE \textbf{Output:} Attention output \( \mathbf{o}_t \in \mathbb{R}^{1 \times d} \), Eviction index \( i_{\text{evict}} \)
\STATE
\STATE // \textit{Initialize accumulators}
\STATE \( \mathbf{o}_{\text{acc}} \leftarrow \mathbf{0} \in \mathbb{R}^d \); \COMMENT{Output accumulator}
\STATE \( l_{\text{acc}} \leftarrow 0 \in \mathbb{R} \) \COMMENT{Softmax denominator accumulator}
\STATE \( \mathbf{S}_{\text{loss}} \leftarrow \mathbf{0} \in \mathbb{R}^{t-1} \) \COMMENT{Temporary storage for un-normalized scores}
\STATE \( B_k \leftarrow \text{block\_size} \)
\STATE
\STATE // \textit{Single pass over the KV cache in blocks}
\FOR{\(j \leftarrow 1, B_k, \dots, t-1\)}
\STATE // \textit{Load a block of K, V from HBM to on-chip SRAM}
\STATE \( \mathbf{K}_j \leftarrow \mathbf{K}[j:j+B_k, :] \); \quad \( \mathbf{V}_j \leftarrow \mathbf{V}[j:j+B_k, :] \)
\STATE
\STATE // \textit{Compute scores and apply mask for the current block}
\STATE \( \mathbf{S}_j \leftarrow \mathbf{q}_t \mathbf{K}_j^T / \sqrt{d} \)
\STATE \( \mathbf{S}_j[\neg \mathbf{M}_j] \leftarrow -\infty \)
\STATE
\STATE // \textit{Compute softmax numerators and update denominator accumulator}
\STATE \( \mathbf{P}_j \leftarrow \exp(\mathbf{S}_j) \)
\STATE \( l_{\text{acc}} \leftarrow l_{\text{acc}} + \text{sum}(\mathbf{P}_j) \)
\STATE
\STATE // \textit{Compute un-normalized contribution vectors and update output accumulator}
\STATE \( \mathbf{C}_j \leftarrow \mathbf{P}_j \cdot \mathbf{V}_j \) \COMMENT{Element-wise scaling, shape: \(B_k \times d\)}
\STATE \( \mathbf{o}_{\text{acc}} \leftarrow \mathbf{o}_{\text{acc}} + \text{sum}(\mathbf{C}_j, \text{dim}=0) \)
\STATE
\STATE // \textit{Compute and store un-normalized importance for the block}
\STATE \( \mathbf{s}_{\text{loss}, j} \leftarrow \text{sum}(|\mathbf{C}_j|, \text{dim}=1) \) \COMMENT{L1-norm of contribution vectors}
\STATE Store \( \mathbf{s}_{\text{loss}, j} \) at corresponding indices in \( \mathbf{S}_{\text{loss}} \)
\ENDFOR
\STATE
\STATE // \textit{Post-processing after the loop}
\STATE \( \mathbf{o}_t \leftarrow \mathbf{o}_{\text{acc}} / l_{\text{acc}} \) \COMMENT{Normalize the final attention output}
\STATE \( \mathbf{S}_{\text{final\_loss}} \leftarrow \mathbf{S}_{\text{loss}} / l_{\text{acc}} \) \COMMENT{Normalize the importance scores}

\STATE \( i_{\text{evict}} \leftarrow \text{argmin}(\mathbf{S}_{\text{final\_loss}}) \) \COMMENT{Find the token with the minimum score}
\STATE
\STATE \textbf{return} \( \mathbf{o}_t, i_{\text{evict}} \)
\end{algorithmic}
\end{small}
\end{algorithm}

\section{Additional Figures}
\label{sec:additional figures}
Figure \ref{fig:all_images} shows the results of scaling model size.
Figure \ref{fig:observation_figures} shows the figures we used in our preliminary experiments.

\begin{figure*}[t] 
    \centering 
    \captionsetup{skip=-2pt} 
    \setlength{\belowcaptionskip}{-10pt} 
    \begin{subfigure}[b]{0.32\textwidth} 
        \includegraphics[width=\textwidth]{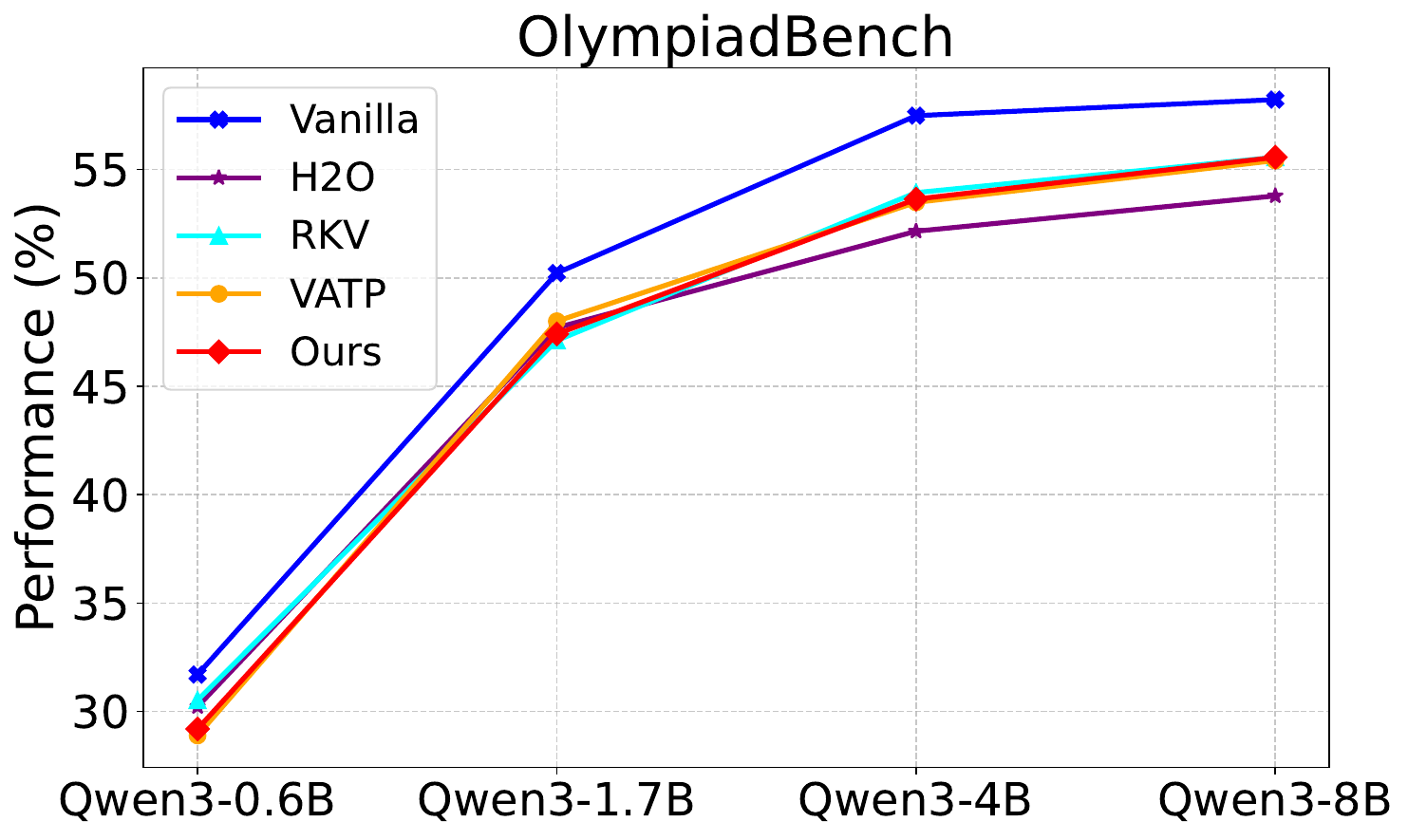} 
        \label{fig:sub1} 
    \end{subfigure}
    \hfill 
    \begin{subfigure}[b]{0.32\textwidth}
        \includegraphics[width=\textwidth]{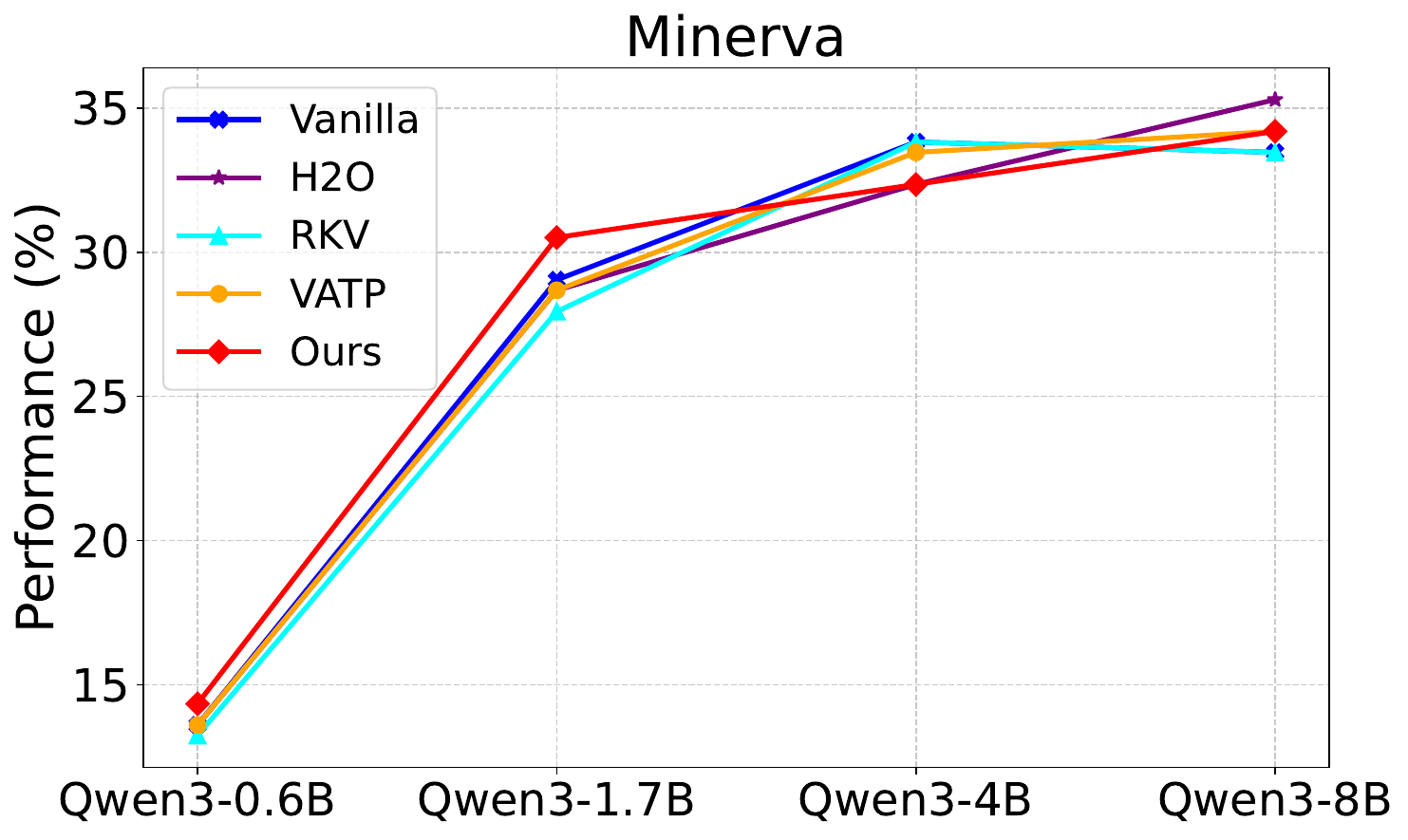}
        \label{fig:sub2}
    \end{subfigure}
    \hfill
    \begin{subfigure}[b]{0.32\textwidth}
        \includegraphics[width=\textwidth]{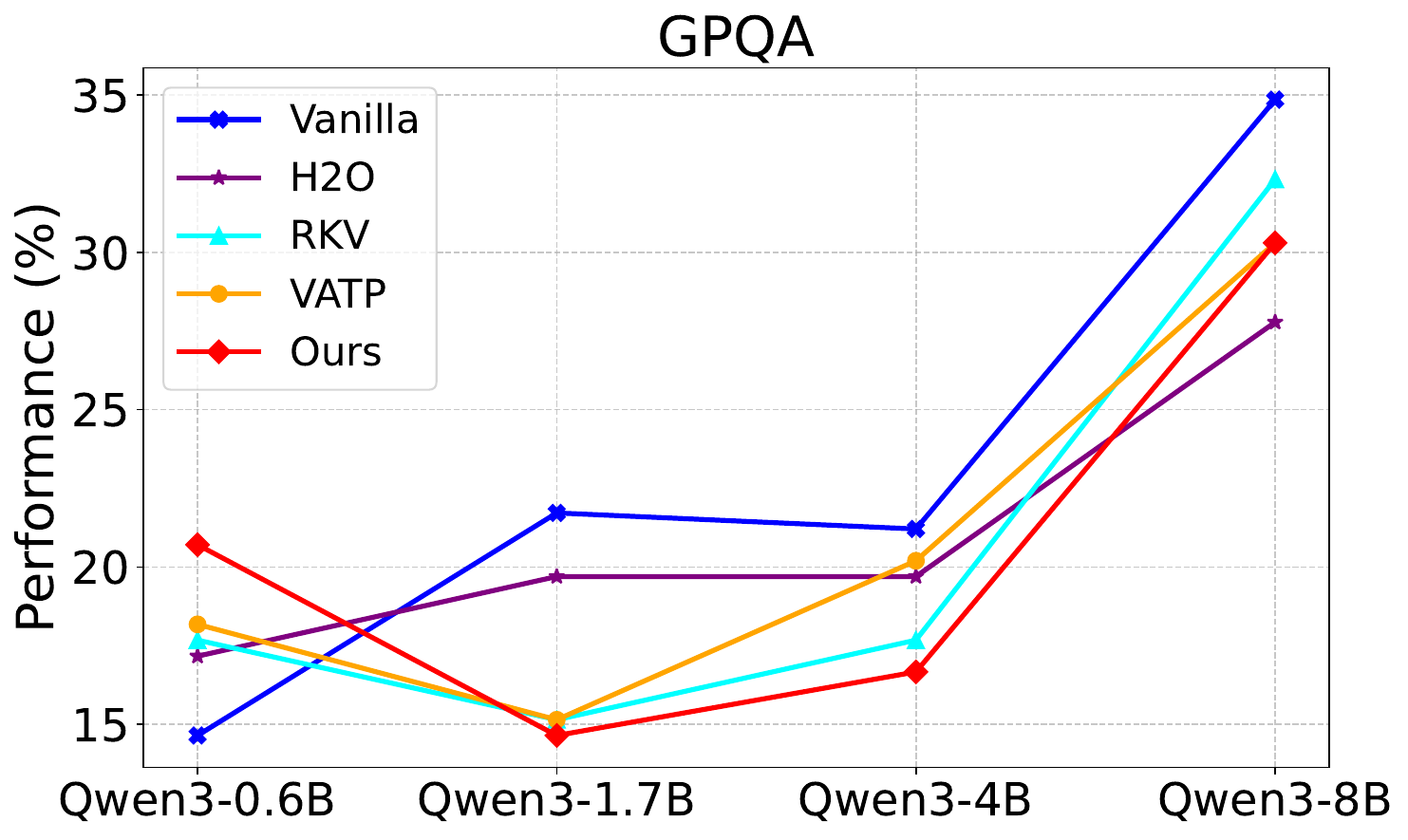}
        \label{fig:sub3}
    \end{subfigure}
    \\[-1em] 

    \begin{subfigure}[b]{0.32\textwidth}
        \includegraphics[width=\textwidth]{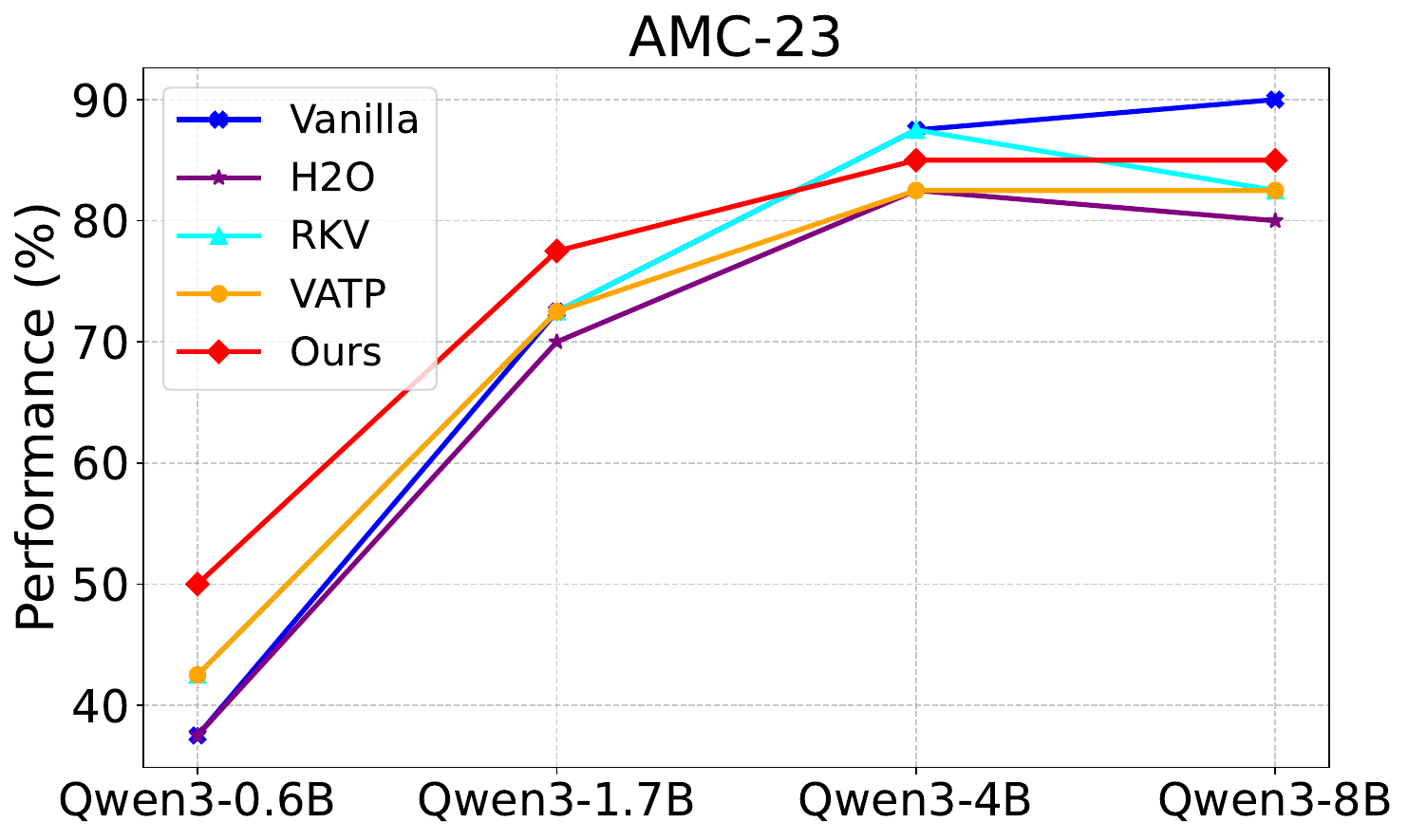}
        \label{fig:sub4}
    \end{subfigure}
    \hfill
    \begin{subfigure}[b]{0.32\textwidth}
        \includegraphics[width=\textwidth]{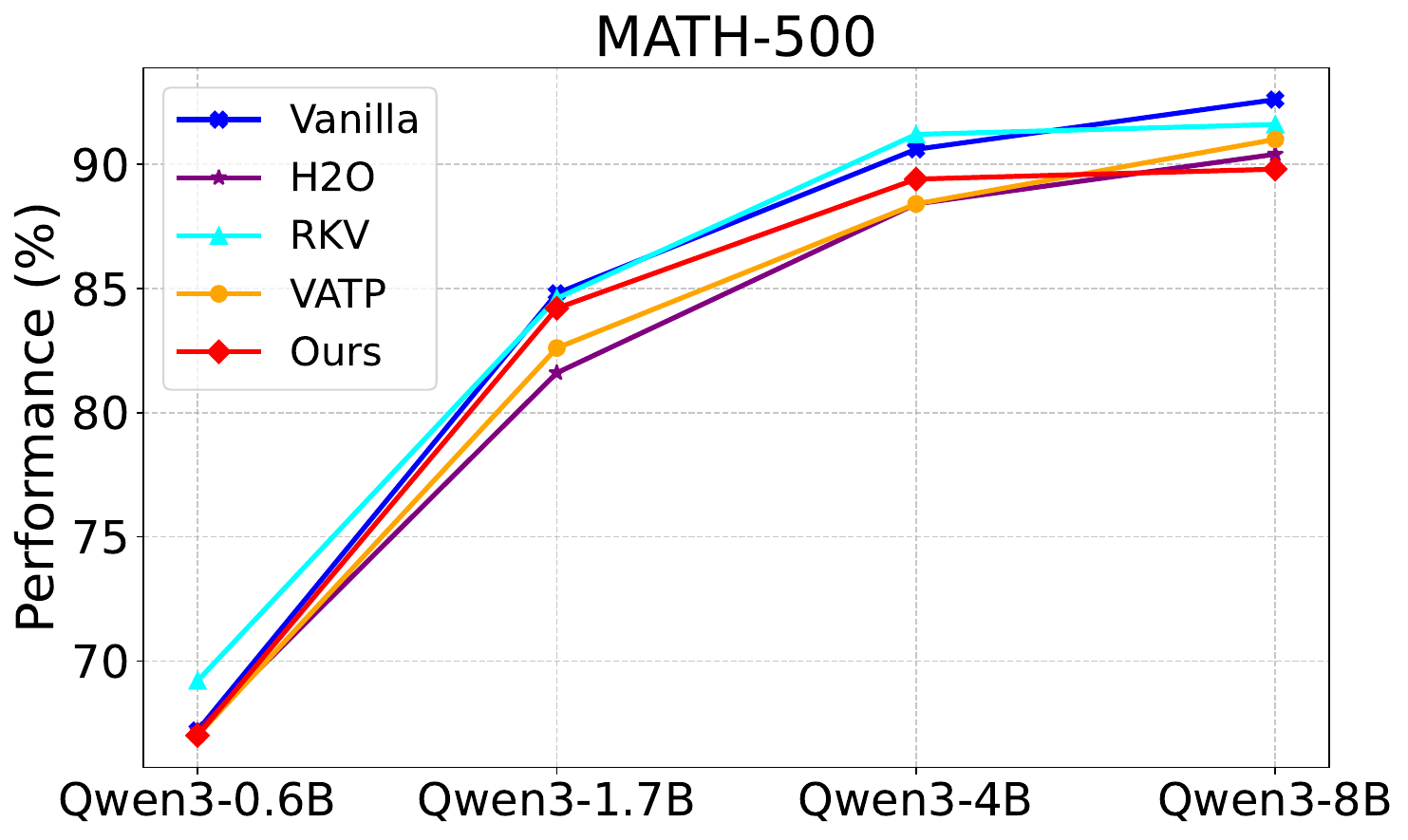}
        \label{fig:sub5}
    \end{subfigure}
    \hfill
    \begin{subfigure}[b]{0.32\textwidth}
        \includegraphics[width=\textwidth]{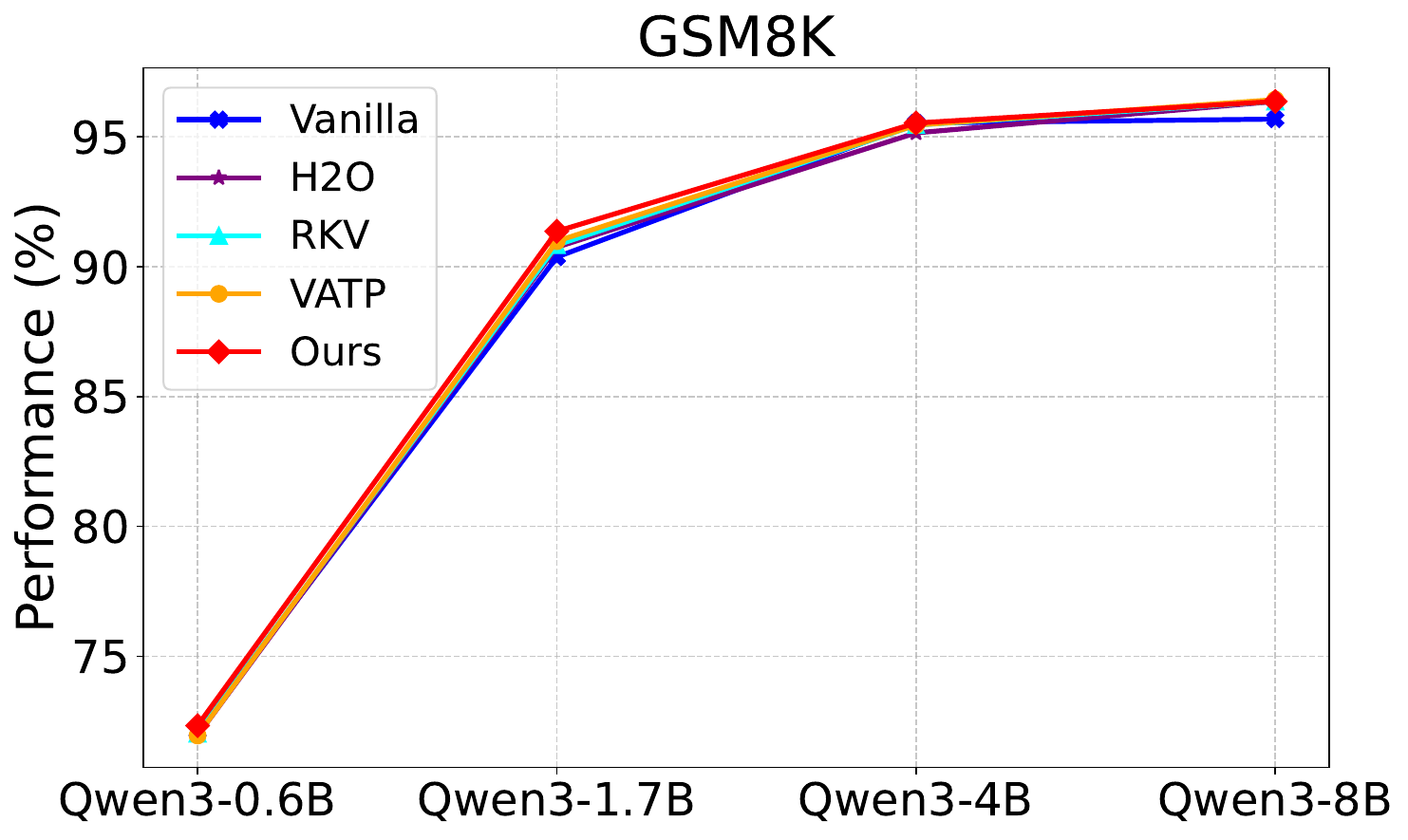}
        \label{fig:sub6}
    \end{subfigure}
    \caption{Accuracy of LongFlow and the baselines across different model sizes on different datasets.} 
    \label{fig:all_images} 
\end{figure*}

\begin{figure*}[htbp]
    \centering
    \begin{subfigure}[b]{0.47\textwidth} 
        \centering
        \includegraphics[width=0.8\linewidth]{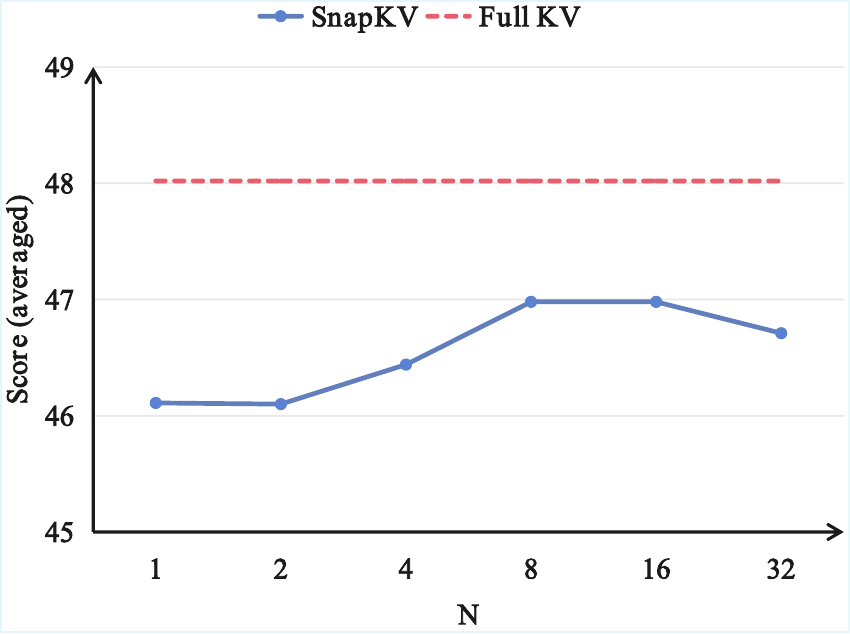}
        \caption{SnapKV performance on LongBench as the query observation window shrinks. We use LLaMA-3-8B and an context length of 8192.}
        \label{fig:snap}
    \end{subfigure}
    %
    %
    \hfill
    %
    %
    \begin{subfigure}[b]{0.47\textwidth} 
        \centering
        \includegraphics[width=0.8\linewidth]{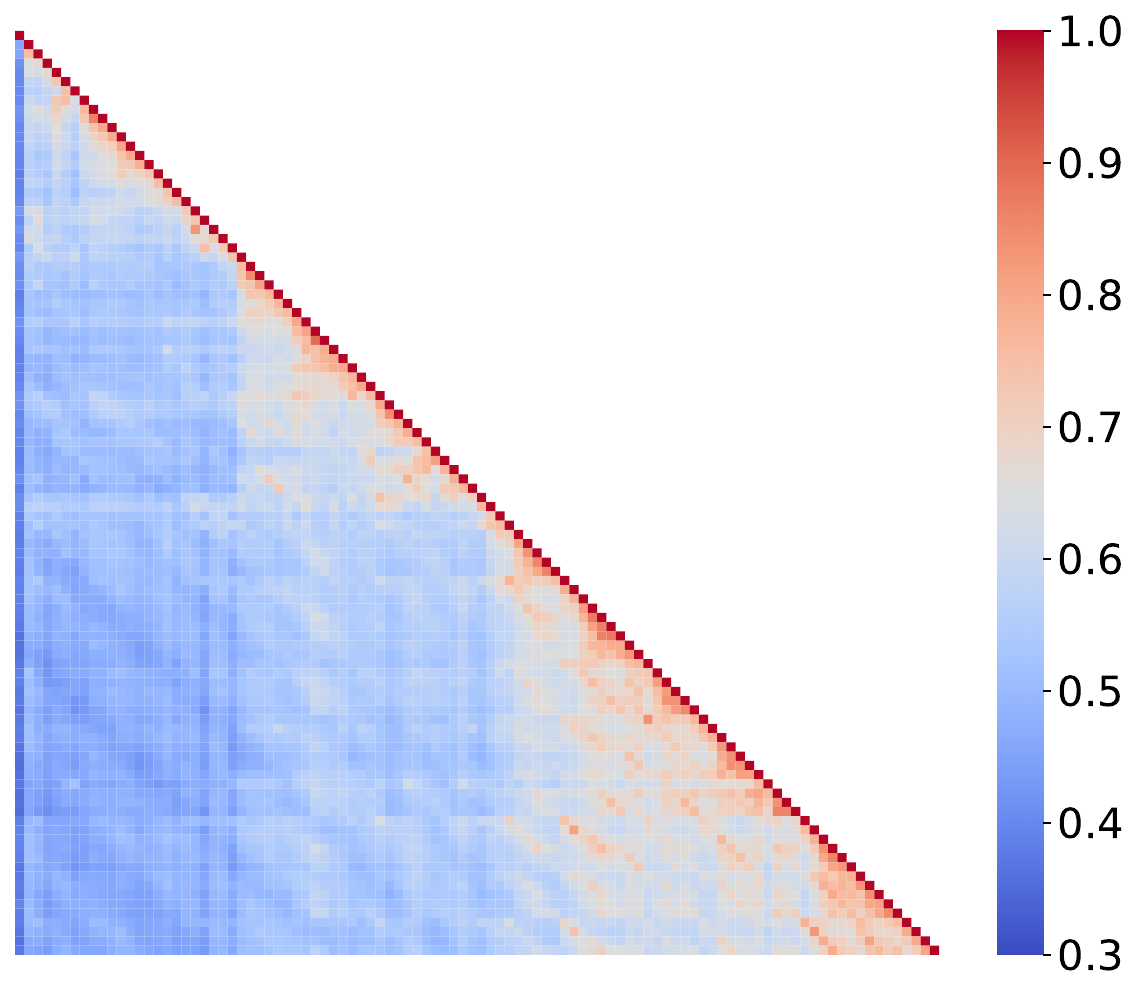}
        \caption{Cosine similarity between queries. Results are shown for the first 100 tokens, averaged over 100 samples from MATH500 using Qwen3-8B (Layer 10, Head 10). The high and stable similarity between adjacent queries supports our approximation.}
        \label{fig:cosine_sim}
    \end{subfigure}
    
    \caption{Empirical motivation for our single-query hypothesis.}
    \label{fig:observation_figures}
\end{figure*}

\end{document}